\newcommand{\name}{Graphite}
\newtheorem{theorem}{Theorem}
\newtheorem{property}[theorem]{Property}
\icmltitlerunning{Graphite: Iterative Generative Modeling of Graphs}
\begin{document}

\twocolumn[
\icmltitle{Graphite: Iterative Generative Modeling of Graphs}

\icmlsetsymbol{equal}{*}

\begin{icmlauthorlist}
\icmlauthor{Aditya Grover}{sta}
\icmlauthor{Aaron Zweig}{sta}
\icmlauthor{Stefano Ermon}{sta}
\end{icmlauthorlist}

\icmlaffiliation{sta}{Department of Computer Science, Stanford University, USA}

\icmlcorrespondingauthor{Aditya Grover}{adityag@cs.stanford.edu}

\icmlkeywords{Machine Learning, Unsupervised Learning, Graphs, Generative Modeling, Deep Learning}

\vskip 0.3in
]
\printAffiliationsAndNotice{}

\begin{abstract}
  Graphs are a fundamental abstraction for modeling relational data. 
  However, graphs are discrete and combinatorial in nature, and learning representations suitable for machine learning tasks poses statistical and computational challenges. 
  In this work, we propose \textit{\name{}}, an algorithmic framework for unsupervised learning of representations over nodes in large graphs using deep latent variable generative models. Our model parameterizes variational autoencoders (VAE) with graph neural networks, and uses a novel iterative graph refinement strategy inspired by low-rank approximations for decoding.
On a wide variety of synthetic and benchmark datasets, \name{} outperforms competing approaches for the tasks of density estimation, link prediction, and node classification. 
Finally, we derive a theoretical connection between message passing in graph neural networks and mean-field variational inference.
\end{abstract}

\section{Introduction}

Latent variable generative modeling is an effective approach for unsupervised representation learning of high-dimensional data~\citep{loehlin1998latent}. In recent years, representations learned by latent variable models parameterized by deep neural networks have shown impressive performance on many tasks such as semi-supervised learning and structured prediction~\citep{kingma2014semi,sohn2015learning}. However, these successes have been largely restricted to specific data modalities such as images and speech. In particular, it is challenging to apply current deep generative models for large scale graph-structured data which arise in a wide variety of domains in physical sciences, information sciences, and social sciences. 

To effectively model the relational structure of large graphs for deep learning, prior works have proposed to use \textit{graph neural networks}~\citep{gori2005new,scarselli2009graph,bruna2013spectral}. A graph neural network learns node-level representations by parameterizing an iterative message passing procedure between nodes and their neighbors.  
The tasks which have benefited from graph neural networks, including semi-supervised learning~\citep{kipf2016semi} and few shot learning~\citep{garcia2017few}, involve \textit{encoding} an input graph to a final output representation (such as the labels associated with the nodes). The inverse problem of learning to \textit{decode} a hidden representation into a graph, as in the case of a latent variable generative model, is a pressing challenge
that we address in this work. 

We propose \textit{\name{}}, a latent variable generative model 
for graphs based on variational autoencoding~\citep{kingma2013auto}. Specifically, we learn a directed model expressing a joint distribution over the entries of adjacency matrix of graphs 
and latent feature vectors for every node.
Our framework uses graph neural networks for inference (encoding) and generation (decoding).  
While the encoding is straightforward and can use any existing graph neural network, the decoding of these latent features to reconstruct the original graph is done using a multi-layer iterative procedure. The procedure starts with an initial reconstruction based on the inferred latent features, and iteratively refines the reconstructed graph via a message passing operation. 
The iterative refinement can be efficiently implemented using graph neural networks. 
In addition to the \name{} model, we also contribute to the theoretical understanding of graph neural networks by deriving equivalences between message passing in graph neural networks with mean-field inference in latent variable models via kernel embeddings~\citep{smola2007hilbert,dai2016discriminative}, formalizing what has thus far has been largely speculated empirically to the best of our knowledge~\citep{yoon2018inference}.

In contrast to recent works focussing on generation of small graphs \textit{e.g.}, molecules
\citep{you2018graphrnn,li2018learning}, the \name{} framework is particularly suited for representation learning on large graphs. Such representations are useful for several downstream tasks. In particular, we demonstrate that representations learned via \name{} outperform competing approaches for graph representation learning empirically for the tasks of density estimation (over entire graphs), link prediction, and semi-supervised node classification on synthetic and benchmark datasets.

\section{Preliminaries}
Throughout this work, we assume that all probability distributions admit absolutely continuous densities on a suitable reference measure. 
Consider a weighted undirected graph $G=(V, E)$ where $V$ and $E$ denote index sets of nodes and edges respectively.
Additionally, we denote the (optional) feature matrix associated with the graph as $\mathbf{X}\in \mathbb{R}^{n \times m}$ for an $m$-dimensional signal associated with each node, for \textit{e.g.}, these could refer to the user attributes in a social network.
We represent the graph structure using a symmetric adjacency matrix $\mathbf{A} \in \mathbb{R}^{n \times n} $ where $n=\vert V \vert$ and the entries $A_{ij}$ denote the weight of the edge between node $i$ and $j$.

\subsection{Weisfeiler-Lehman algorithm}
The 
Weisfeiler-Lehman (WL) algorithm~\citep{weisfeiler1968reduction,douglas2011weisfeiler} is a heuristic test of graph isomorphism between any two graphs $G$ and $G'$. The algorithm proceeds in iterations. 
Before the first iteration, we label every node in $G$ and $G'$ with a scalar \textit{isomorphism invariant} initialization (\textit{e.g.}, node degrees). That is, if $G$ and $G'$ are assumed to be isomorphic, then an isomorphism invariant initialization is one where the matching nodes establishing the isomorphism in $G$ and $G'$ have the same labels (a.k.a. messages).
Let $\mathbf{H}^{(0)}=[h_1^{(l)}, h_2^{(l)}, \cdots, h_n^{(l)}]^T$ denote the vector of initializations 
for the nodes in the graph at iteration $l\in \mathbb{N} \cup 0$.
 At every iteration $l>0$, we perform a relabelling of nodes in $G$ and $G'$ based on a message passing update rule:
\begin{align}\label{eq:wl_mp}
\mathbf{H}^{(l)} \leftarrow \mathrm{hash}\left(\mathbf{A}\mathbf{H}^{(l-1)} \right)
\end{align}
where $\mathbf{A}$ denotes the adjacency matrix of the corresponding graph and $\mathrm{hash}:\mathbb{R}^n\to\mathbb{R}^n$ is any suitable hash function \textit{e.g.}, a non-linear activation. Hence, the message 
for every node is computed as a hashed sum of the messages from the neighboring nodes (since $A_{ij}\neq 0$ only if $i$ and $j$ are neighbors). We repeat the process for a specified number of iterations, or until convergence. If the label sets for the nodes in $G$ and $G'$ are equal (which can be checked using sorting in $O(n\log n)$ time), then the algorithm declares the two graphs $G$ and $G'$ to be isomorphic. 

The ``$k$-dim" WL algorithm extends the 1-dim algorithm above by simultaneously passing messages of length $k$ (each initialized with some isomorphism invariant scheme).
A positive test for isomorphism requires equality in all $k$ dimensions for nodes in $G$ and $G'$ after the termination of message passing. 
This algorithmic test is a heuristic which guarantees no false negatives but can give false positives wherein two non-isomorphic graphs can be falsely declared isomorphic. 
Empirically, the test has been shown to fail on some regular graphs but gives excellent performance on real-world graphs~\citep{shervashidze2011weisfeiler}.

\subsection{Graph neural networks}
Intuitively, the WL algorithm encodes the structure of the graph in the form of messages at every node.
Graph neural networks (GNN) build on this observation and parameterize an unfolding of the iterative message passing procedure which we describe next.

A GNN consists of many layers, indexed by $l\in \mathbb{N}$ with each layer associated with an activation $\eta_l$ and a dimensionality $d_l$. In addition to the input graph $\mathbf{A}$, every layer $l\in \mathbb{N}$ of the GNN takes as input the activations from the previous layer $\mathbf{H}^{(l-1)} \in \mathbb{R}^{n \times d_{l-1}}$, a family of linear transformations $\mathcal{F}_{l}: \mathbb{R}^{n \times n} \rightarrow \mathbb{R}^{n \times n}$, and a matrix of learnable weight parameters $\mathbf{W}_{l} \in \mathbb{R}^{d_{l-1} \times d_{l}}$ and optional bias parameters $\mathbf{B}_{l} \in \mathbb{R}^{n \times d_{l}}$. Recursively, the layer wise propagation rule in a GNN is given by:
\begin{align}\label{eq:gnn_mp}
\mathbf{H}^{(l)} \leftarrow \eta_{l}\left( \mathbf{B}_l + \sum_{f \in \mathcal{F}_{l}}f(\mathbf{A})\mathbf{H}^{(l-1)} \mathbf{W}_{l} \right)
\end{align}
with the base cases $\mathbf{H}^{(0)} = \mathbf{X}$ and $d_0=m$.
Here, $m$ is the feature dimensionality.
If there are no explicit node features, we set $\mathbf{H}^{(0)} = \mathbf{I}_n$ (identity) and $d_0=n$.  
Several variants of graph neural networks have been proposed in prior work. For instance, graph convolutional networks (GCN) \cite{kipf2016semi} instantiate graph neural networks with a permutation equivariant propagation rule:
\begin{align}\label{eq:gcn_mp}
\mathbf{H}^{(l)} \leftarrow \eta_{l}\left(\mathbf{B}_l + \tilde{\mathbf{A}}\mathbf{H}^{(l-1)} \mathbf{W}_{l}\right)
\end{align}
where $\tilde{\mathbf{A}}=\mathbf{D}^{-1/2}\mathbf{A}\mathbf{D}^{-1/2}$ is the symmetric diagonalization of $\mathbf{A}$ given the diagonal degree matrix $\mathbf{D}$ (\textit{i.e.}, $D_{ii}= \sum_{(i,j) \in E} A_{ij}$), and same base cases as before.
Comparing the above with the WL update rule in Eq.~\eqref{eq:wl_mp}, we can see that the activations for every layer in a GCN are computed via parameterized, scaled activations (messages) of the previous layer being propagated over the graph, with the hash function implicitly specified using an activation function $\eta_{l}$. 

Our framework is agnostic to  instantiations of message passing rule of a graph neural network in Eq.~\eqref{eq:gnn_mp}, and we use graph convolutional networks for experimental validation due to the permutation equivariance property. For brevity, we denote the output $\mathbf{H}$ for the final layer of a multi-layer graph neural network with input adjacency matrix $\mathbf{A}$, node feature matrix $\mathbf{X}$, and parameters $\langle\mathbf{W},\mathbf{B}\rangle$ as $\mathbf{H} = \mathrm{GNN}_{\langle\mathbf{W},\mathbf{B}\rangle}(\mathbf{A}, \mathbf{X})$, with appropriate activation functions and linear transformations applied at each hidden layer of the network.

\section{Generative Modeling via \name{}}\label{sec:framework}

For generative modeling of graphs, we are interested in learning a parameterized distribution over adjacency matrices $\mathbf{A}$. In this work, we restrict ourselves to modeling graph \textit{structure} only, and any additional information in the form of node features $\mathbf{X}$ is incorporated as conditioning evidence.

\begin{figure}[t]
\centering
\includegraphics[width=0.45\textwidth]{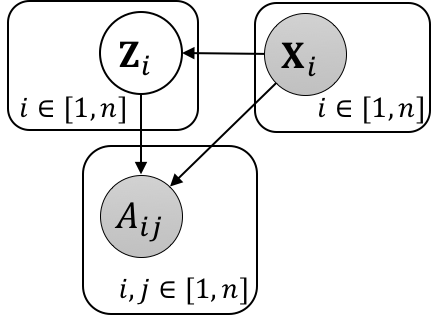}
\caption{Latent variable model for \name{}. Observed evidence variables in gray.}\label{fig:lvm}
\end{figure}
In \name{}, we adopt a latent variable approach for modeling the generative process. That is, we introduce latent variable vectors $\mathbf{Z}_i \in \mathbb{R}^{k}$ and evidence feature vectors 
$\mathbf{X}_i \in \mathbb{R}^{m}$ for each node $i \in \{1,2,\cdots,n\}$ along with an observed variable for each pair of nodes $A_{ij} \in \mathbb{R}$. Unless necessary, we use a succinct representation $\mathbf{Z} \in \mathbb{R}^{n \times k}$, $\mathbf{X} \in \mathbb{R}^{n \times m}$, and $\mathbf{A} \in \mathbb{R}^{n \times n}$ for the variables henceforth. 
The conditional independencies between the variables can be summarized in the directed graphical model (using plate notation) in Figure~\ref{fig:lvm}. We can learn the model parameters $\theta$ by maximizing the marginal likelihood of the observed adjacency matrix conditioned on $\mathbf{X}$:
\begin{align}\label{eq:marginal_ll}
\max_\theta \log p_\theta (\mathbf{A} \vert \mathbf{X}) = \log \int_{\mathbf{Z}} p_\theta (\mathbf{A}, \mathbf{Z} \vert \mathbf{X}) \mathrm{d}\mathbf{Z}
\end{align}

Here, $p(\mathbf{Z} \vert \mathbf{X})$ is a fixed prior distribution over the latent features of every node \textit{e.g.}, isotropic Gaussian.
If we have multiple graphs in our dataset, we maximize the expected log-likelihoods over all the corresponding adjacency matrices. We can obtain a tractable, stochastic evidence lower bound (ELBO) to the above objective by
introducing a variational posterior $q_\phi(\mathbf{Z}  \vert \mathbf{A}, \mathbf{X})$ with parameters $\phi$: 
\begin{align}\label{eq:elbo}
\log p_\theta (\mathbf{A} \vert \mathbf{X}) \geq \mathbb{E}_{q_\phi(\mathbf{Z}\vert \mathbf{A}, \mathbf{X})}\left [\log \frac{p_\theta(\mathbf{A}, \mathbf{Z} \vert \mathbf{X})}{q_\phi(\mathbf{Z}\vert \mathbf{A}, \mathbf{X})}\right]
\end{align}

The lower bound is tight when the variational posterior $q_\phi(\mathbf{Z}\vert \mathbf{A}, \mathbf{X})$ matches the true posterior $p_\theta(\mathbf{Z} \vert \mathbf{A}, \mathbf{X})$ and hence maximizing the above objective optimizes for the parameters that define the best approximation to the true posterior within the variational family~\citep{blei2017variational}.
We now discuss parameterizations for specifying  $q_\phi(\mathbf{Z}\vert \mathbf{A}, \mathbf{X})$ (\textit{i.e.}, encoder) and $p_\theta(\mathbf{A} \vert \mathbf{Z}, \mathbf{X})$ (\textit{i.e.}, decoder).

\paragraph{Encoding using forward message passing.}

Typically we use the mean field approximation for defining the variational family and hence:
\begin{align}\label{eq:enc_mf}
q_\phi(\mathbf{Z} \vert \mathbf{A}, \mathbf{X}) \approx \prod_{i=1}^n q_{\phi_i}(\mathbf{Z}_i \vert \mathbf{A}, \mathbf{X})
\end{align}
Additionally, we would like to make distributional assumptions on each variational marginal density $q_{\phi_i}(\mathbf{Z}_i\vert \mathbf{A}, \mathbf{X})$ such that it is reparameterizable and easy-to-sample, such that the gradients w.r.t. $\phi_i$ have low variance~\citep{kingma2013auto}. In \name{}, we assume isotropic Gaussian variational marginals with diagonal covariance.
The parameters for the variational marginals $q_{\phi_i}(\mathbf{Z}\vert \mathbf{A}, \mathbf{X})$ are specified using a graph neural network: 
\begin{align}\label{eq:graphite_enc}
\boldsymbol{\mu}, \boldsymbol{\sigma} &= \mathrm{GNN}_{\phi}(\mathbf{A}, \mathbf{X})
\end{align}
where $\boldsymbol{\mu}$ and $\boldsymbol{\sigma}$ denote the vector of means and standard deviations for the variational marginals $\{q_{\phi_i}(\mathbf{Z}_i\vert \mathbf{A}, \mathbf{X})\}_{i=1}^n$ and $\phi=\{\phi_i\}_{i=1}^n$ are the full set of variational parameters.

\paragraph{Decoding using reverse message passing.} For specifying the observation model $p_\theta(\mathbf{A} \vert \mathbf{Z}, \mathbf{X})$, we cannot directly use a graph neural network since we do not have an input graph for message passing. To sidestep this issue, we propose an iterative two-step approach that alternates between defining an intermediate graph and then gradually refining this graph through message passing.
Formally, given a latent matrix $\mathbf{Z}$ and an input feature matrix $\mathbf{X}$, we iterate over the following sequence of operations:
\begin{align}
\widehat{\mathbf{A}} &= \frac{\mathbf{Z} \mathbf{Z}^T}{\lVert \mathbf{Z} \rVert^2} + \mathbf{1}\mathbf{1}^T,\label{eq:int_graph_decoding}\\
\mathbf{Z}^\ast &= \mathrm{GNN}_\theta(\widehat{\mathbf{A}},  [\mathbf{Z} \vert \mathbf{X}]) \label{eq:refine_graph_dec}
\end{align}
where the second argument to the GNN is a concatenation of  $\mathbf{Z}$ and $\mathbf{X}$. The first step constructs an intermediate weighted graph $\widehat{\mathbf{A}} \in \mathbb{R}^{n \times n}$ by applying an inner product of $\mathbf{Z}$ with itself and adding an additional constant of 1 to ensure entries are non-negative. 
And the second step performs a pass through a parameterized graph neural network. We can repeat the above sequence to gradually refine the feature matrix $\mathbf{Z}^\ast$. The final distribution over graph parameters is obtained using an inner product step on $\mathbf{Z}^\ast\in \mathbb{R}^{n \times k^\ast}$ akin to Eq.~\eqref{eq:int_graph_decoding}, where $k^\ast \in \mathbb{N}$ is determined via the network architecture.
For efficient sampling, we assume the observation model factorizes:
\begin{align}\label{eq:dec_mf}
p_\theta(\mathbf{A} \vert \mathbf{Z}, \mathbf{X}) = \prod_{i=1}^n \prod_{j=1}^n p_\theta^{(i,j)}(\mathbf{A}_{ij} \vert \mathbf{Z}^\ast).
\end{align}
The distribution over the individual edges can be expressed as a Bernoulli or Gaussian distribution for unweighted and real-valued edges respectively. E.g., the edge probabilities for an unweighted graph are given as $\mathrm{sigmoid}(\mathbf{Z}^\ast \mathbf{Z}^{\ast^T})$.

\begin{table*}[t]
  \caption{Mean reconstruction errors and negative log-likelihood estimates (in nats) for autoencoders and variational autoencoders respectively on test instances from six different generative families. Lower is better.
  }
    \vspace{0.05in}
  \label{table-elbo}
  \centering
  \resizebox{\textwidth}{!}{\begin{tabular}{|c|c|c|c|c|c|c|}
    \toprule
	& Erdos-Renyi & Ego & Regular & Geometric & Power Law & Barabasi-Albert \\
    \midrule
    GAE & 221.79 $\pm$ 7.58 & 197.3 $\pm$ 1.99 & 198.5 $\pm$ 4.78 & 514.26 $\pm$ 41.58 & 519.44 $\pm$ 36.30 & 236.29 $\pm$ 15.13\\
        \name{}-AE & \textbf{195.56} $\pm$ 1.49 & \textbf{182.79} $\pm$ 1.45 & \textbf{191.41} $\pm$ 1.99 & \textbf{181.14} $\pm$ 4.48 & \textbf{201.22} $\pm$ 2.42 & \textbf{192.38} $\pm$ 1.61\\
    \midrule
    VGAE & 273.82 $\pm$ 0.07 & 273.76 $\pm$ 0.06 & 275.29 $\pm$ 0.08 & 274.09 $\pm$ 0.06 & 278.86 $\pm$ 0.12 & 274.4 $\pm$ 0.08\\
    \name{}-VAE & \textbf{270.22} $\pm$ 0.15 & \textbf{270.70} $\pm$ 0.32 & \textbf{266.54} $\pm$ 0.12 & \textbf{269.71} $\pm$ 0.08 & \textbf{263.92} $\pm$ 0.14 & \textbf{268.73} $\pm$ 0.09\\

    \bottomrule
  \end{tabular}}
\end{table*}

\subsection{Scalable learning \& inference in \name{}} 

For representation learning of large graphs, we require the encoding and decoding steps  in \name{} to be computationally efficient. On the surface, the decoding step involves inner products of potentially dense matrices $\mathbf{Z}$, which is an $O(n^2k)$ operation. 
Here, $k$ is the dimension of the per-node latent vectors $\mathbf{Z}_i$ used to define $\widehat{\mathbf{A}}$. 

For any intermediate decoding step as in Eq.~\eqref{eq:int_graph_decoding}, we propose to offset this expensive computation by using the associativity property of matrix multiplications for the message passing step in Eq.~\eqref{eq:refine_graph_dec}. For notational brevity, consider the simplified graph propagation rule for a GNN:
\begin{align*}
\mathbf{H}^{(l)} \leftarrow \eta_{l}\left( \widehat{\mathbf{A}}\mathbf{H}^{(l-1)}\right)
\end{align*}
where $\widehat{\mathbf{A}}$ is defined in Eq.~\eqref{eq:int_graph_decoding}. 

Instead of directly taking an inner product of $\mathbf{Z}$ with itself, we note that the subsequent operation involves another matrix multiplication and hence, we can perform right multiplication instead.
If $d_l$ and $d_{l-1}$ denote the size of the layers $\mathbf{H}^{(l)}$ and $\mathbf{H}^{(l-1)}$ respectively, then the time complexity of propagation based on right multiplication is given by $O(nkd_{l-1} + nd_{l-1}d_l)$. 

The above trick sidesteps the quadratic $n^2$ complexity for decoding in the intermediate layers without any loss in statistical accuracy. The final layer however still involves an inner product with respect to $\mathbf{Z}^\ast$ between potentially dense matrices. However, since the edges are generated independently, we can approximate the loss objective by performing a Monte Carlo evaluation of the reconstructed adjacency matrix parameters in Eq.~\eqref{eq:dec_mf}. By adaptively choosing the number of entries for Monte Carlo approximation, we can trade-off statistical accuracy for computational budget. 

\section{Experimental Evaluation}\label{sec:exps}

We evaluate \name{} on tasks involving entire graphs, nodes, and edges. 
We consider two variants of our proposed framework: the \textit{\name{}-VAE}, which corresponds to a directed latent variable model as described in Section~\ref{sec:framework} and \textit{\name{}-AE}, which corresponds to an autoencoder trained to minimize the error in reconstructing an input adjacency matrix. For unweighted graphs (\textit{i.e.}, $\mathbf{A}\in \{0,1\}^{n \times n}$), the reconstruction terms in the objectives for both \name{}-VAE and \name{}-AE minimize the negative cross entropy between the input and reconstructed adjacency matrices. For weighted graphs, we use the mean squared error. Additional hyperparameter details are described in Appendix~\ref{app:expt}.

\subsection{Reconstruction \& density estimation}

In the first set of tasks, we evaluate learning in \name{} based on held-out reconstruction losses and log-likelihoods estimated by the learned \name{}-VAE and \name{}-AE models respectively on a collection of graphs with varying sizes. In direct contrast to modalities such as images, graphs cannot be straightforwardly reduced to a fixed number of vertices for input to a graph convolutional network. One simplifying modification taken by \citet{bojchevski2018netgan} is to consider only the largest connected component for evaluating and optimizing the objective, which we appeal to as well.  Thus by setting the dimensions of $\mathbf{Z}^*$ to a maximum number of vertices, \name{} can be used for inference tasks over entire graphs with potentially smaller sizes by considering only the largest connected component.

We create datasets from six graph families with fixed, known generative processes: the Erdos-Renyi, ego-nets, random regular graphs,  random geometric graphs, random Power Law Tree and Barabasi-Albert. For each family, 300 graph instances were sampled with each instance having $10-20$ nodes and evenly split into train/validation/test instances. As a benchmark comparison, we compare against the Graph Autoencoder/Variational Graph Autoencoder (GAE/VGAE)~\citep{kipf2016variational}. The GAE/VGAE models consist of an encoding procedure similar to \name{}. However, the decoder has no learnable parameters and reconstruction is done solely through an inner product operation (such as the one in Eq.~\eqref{eq:int_graph_decoding}). 

The mean reconstruction errors and the negative log-likelihood results on a test set of instances are shown in Table~\ref{table-elbo}. Both \name{}-AE and \name{}-VAE outperform AE and VGAE significantly on these tasks, indicating the usefulness of learned decoders in \name{}.

\begin{table}[t]
\centering
  \caption{Citation network statistics}
  \label{table-stats}
  \vspace{0.05in}
  \centering
  \begin{tabular}{|c|c|c|c|c|}
    \toprule
	& Nodes & Edges & Node Features & Labels \\
    \midrule
    Cora & 2708 & 5429 & 1433 & 7\\
    Citeseer & 3327 & 4732 & 3703 & 6\\
    Pubmed & 19717 & 44338 & 500 & 3\\
    \bottomrule
  \end{tabular}
\end{table}

\begin{table*}[t]
  \caption{Area Under the ROC Curve (AUC) for link prediction (* denotes dataset with features). Higher is better.
  }
  \label{table-auc}
   \vspace{0.05in}
  \centering
  \begin{tabular}{|c|c|c|c|c|c|c|}
    \toprule
	& Cora &  Citeseer & Pubmed & Cora* & Citeseer* & Pubmed* \\
    \midrule
    SC & 89.9 $\pm$ 0.20 & 91.5 $\pm$ 0.17& \textbf{94.9} $\pm$ 0.04 & - & - & -\\
    DeepWalk & 85.0 $\pm$ 0.17& 88.6 $\pm$ 0.15& 91.5 $\pm$ 0.04& - & - & -\\
    node2vec & 85.6 $\pm$ 0.15& 89.4 $\pm$ 0.14& 91.9 $\pm$ 0.04 & - & - & -\\
    GAE & 90.2 $\pm$ 0.16& 92.0 $\pm$ 0.14& 92.5 $\pm$ 0.06& 93.9 $\pm$ 0.11& 94.9 $\pm$ 0.13& 96.8 $\pm$ 0.04\\
    VGAE & 90.1 $\pm$ 0.15& 92.0 $\pm$ 0.17& 92.3 $\pm$ 0.06 & 94.1 $\pm$ 0.11& 96.7 $\pm$ 0.08& 95.5 $\pm$ 0.13\\
    \midrule
    \name{}-AE & 91.0 $\pm$ 0.15 & 92.6 $\pm$ 0.16& 94.5 $\pm$ 0.05& 94.2 $\pm$ 0.13& 96.2 $\pm$ 0.10& \textbf{97.8} $\pm$ 0.03 \\
    \name-VAE & \textbf{91.5} $\pm$ 0.15 & \textbf{93.5} $\pm$ 0.13 & 94.6 $\pm$ 0.04& \textbf{94.7} $\pm$ 0.11 & \textbf{97.3} $\pm$ 0.06 & 97.4 $\pm$ 0.04\\
    \bottomrule
  \end{tabular}
\end{table*}

\normalsize
\begin{table*}[t]
  \caption{Average Precision (AP) scores for link prediction (* denotes dataset with features). Higher is better.}
  \label{table-ap}
  \centering
   \vspace{0.05in}
  \begin{tabular}{|c|c|c|c|c|c|c|}
    \toprule
	& Cora & Citeseer & Pubmed & Cora* & Citeseer* & Pubmed* \\
    \midrule
    SC & 92.8 $\pm$ 0.12 & 94.4 $\pm$ 0.11& \textbf{96.0} $\pm$ 0.03 & - & - & -\\
    DeepWalk & 86.6 $\pm$ 0.17& 90.3 $\pm$ 0.12& 91.9 $\pm$ 0.05& - & - & -\\
    node2vec & 87.5 $\pm$ 0.14& 91.3 $\pm$ 0.13& 92.3 $\pm$ 0.05 & - & - & -\\
    GAE & 92.4 $\pm$ 0.12& 94.0 $\pm$ 0.12& 94.3 $\pm$ 0.5& 94.3 $\pm$ 0.12& 94.8 $\pm$ 0.15& 96.8 $\pm$ 0.04\\
    VGAE & 92.3 $\pm$ 0.12& 94.2 $\pm$ 0.12& 94.2 $\pm$ 0.04& 94.6 $\pm$ 0.11& 97.0 $\pm$ 0.08& 95.5 $\pm$ 0.12\\
    \midrule
\name-AE & 92.8 $\pm$ 0.13& 94.1 $\pm$ 0.14& 95.7 $\pm$ 0.06& 94.5 $\pm$ 0.14& 96.1 $\pm$ 0.12& \textbf{97.7} $\pm$ 0.03 \\
\name-VAE & \textbf{93.2} $\pm$ 0.13 & \textbf{95.0} $\pm$ 0.10 & \textbf{96.0} $\pm$ 0.03 & \textbf{94.9} $\pm$ 0.13 & \textbf{97.4} $\pm$ 0.06 & 97.4 $\pm$ 0.04\\

    \bottomrule
  \end{tabular}
\end{table*}

\normalsize

\begin{figure}[ht]
\centering
\begin{subfigure}[b]{0.45\textwidth}
\centering
\includegraphics[width=\textwidth]{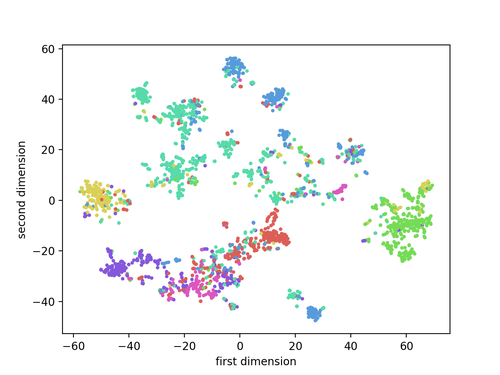}
\caption{Graphite-AE}
\end{subfigure}
\begin{subfigure}[b]{0.45\textwidth}
\centering
\includegraphics[width=\textwidth]{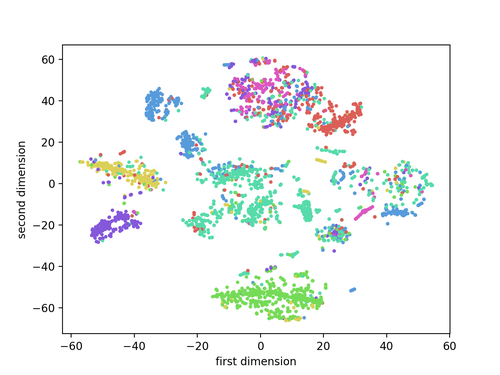}
\caption{Graphite-VAE}
\end{subfigure}
\caption{t-SNE embeddings of the latent feature vectors for the Cora dataset. Colors denote labels.}\label{fig:tsne_clustering} 
\vspace{-0.05in}
\end{figure}

\subsection{Link prediction}

The task of link prediction is to predict whether an edge exists between a pair of nodes~\citep{loehlin1998latent}. Even though \name{} learns a distribution over graphs, it can be used for predictive tasks within a \textit{single} graph. In order to do so, we learn a model for a random, connected training subgraph of the true graph. For validation and testing, we add a balanced set of positive and negative (false) edges to the original graph and evaluate the model performance based on the reconstruction probabilities assigned to the validation and test edges (similar to \textit{denoising} of the input graph). In our experiments, we held out a set of $5\%$ edges for validation, $10\%$ edges for testing, and train all models on the remaining subgraph. Additionally, the validation and testing sets also each contain an equal number of non-edges.

 \paragraph{Datasets.} We compared across standard benchmark citation network datasets: Cora, Citeseer, and Pubmed with papers as nodes and citations as edges~\citep{sen2008networks}. 
 The node-level features correspond to the text attributes in the papers.
The dataset statistics are summarized in Table~\ref{table-stats}.

\paragraph{Baselines and evaluation metrics.} We evaluate performance based on the Area Under the ROC Curve (AUC) and Average Precision (AP) metrics. We evaluated \name{}-VAE and \name{}-AE against the following baselines: Spectral Clustering (SC)~\citep{tang2011leveraging}, DeepWalk~\citep{perozzi2014deepwalk}, node2vec~\citep{grover2016node2vec}, and GAE/VGAE~\citep{kipf2016variational}. SC, DeepWalk, and node2vec do not provide the ability to incorporate node features while learning embeddings, and hence we evaluate them only on the featureless datasets.

\paragraph{Results.} The AUC and AP results (along with standard errors) are shown in Table~\ref{table-auc} and Table~\ref{table-ap} respectively averaged over 50 random train/validation/test splits. 
On both metrics, \name{}-VAE gives the best performance overall. \name{}-AE also gives good results, generally outperforming its closest competitor GAE.

\paragraph{Qualitative evaluation.} We  visualize the embeddings learned by \name{} and given by a 2D t-SNE projection~\citep{maaten2008visualizing} of the latent feature vectors (given as rows for $\mathbf{Z}$ with $\lambda=0.5$) on the Cora dataset in Figure~\ref{fig:tsne_clustering}. Even without any access to label information for the nodes during training, the name{} models are able to cluster the nodes (papers) as per their labels (paper categories).

\subsection{Semi-supervised node classification}

Given labels for a subset of nodes in an underlying graph, the goal of this task is to predict the labels for the remaining nodes. We consider a \textit{transductive} setting, where we have access to the test nodes (without their labels) during training.  

Closest approach to \name{} for this task is a supervised graph convolutional network (GCN) trained end-to-end. We consider an extension of this baseline, wherein we augment the GCN objective with the \name{} objective and a hyperparameter to control the relative importance of the two terms in the combined objective. The parameters $\phi$ for the encoder are shared across these two objectives, with an additional GCN layer for mapping the encoder output to softmax probabilities over the requisite number of classes. All parameters are learned jointly.

\paragraph{Results.} The classification accuracy of the semi-supervised models is given in Table~\ref{table-acc}. We find that \name{}-hybrid outperforms the competing models on all datasets and in particular the GCN approach which is the closest baseline. Recent work in Graph Attention Networks shows that extending GCN by incoporating attention can boost performance on this task~\citep{velickovic2018graph}. 
Using GATs in place of GCNs for parameterizing \name{} could yield similar performance boost in future work.

\begin{table}[t]
\centering
  \caption{Classification accuracies (* denotes dataset with features). Baseline numbers from~\citet{kipf2016semi}.}
  \label{table-acc}
  \centering
\vspace{0.05in}
  \begin{tabular}{|c|c|c|c|}
    \toprule
	& Cora* & Citeseer* & Pubmed* \\
    \midrule
    SemiEmb & 59.0 & 59.6 & 71.1 \\
    DeepWalk & 67.2 & 43.2 & 65.3 \\
    ICA & 75.1 & 69.1 & 73.9 \\
    Planetoid & 75.7 & 64.7 & 77.2 \\
    GCN & 81.5 & 70.3 & 79.0\\ \midrule
    Graphite & \textbf{82.1} $\pm$ 0.06 & \textbf{71.0} $\pm$ 0.07 & \textbf{79.3} $\pm$ 0.03\\
    \bottomrule
  \end{tabular}
\end{table}

\section{ Theoretical Analysis}\label{sec:interpret}

In this section, we derive a theoretical connection between message passing in graph neural networks and approximate inference in related undirected graphical models. 

\subsection{Kernel embeddings}
We first provide a brief background on kernel embeddings.
A kernel defines a notion of similarity between pairs of objects~\citep{scholkopf2002learning,shawe2004kernel}.
Let $K: \mathcal{Z} \times \mathcal{Z} \rightarrow \mathbb{R}$ be the kernel function defined over a space of objects, say $\mathcal{Z}$. 
With every kernel function $K$, we have an associated feature map $\psi:\mathcal{Z} \to \mathcal{H}$ where $\mathcal{H}$ is a potentially infinite dimensional feature space.

Kernel methods can be used to specify embeddings of \textit{distributions} of arbitrary objects~\citep{smola2007hilbert,gretton2007kernel}. 
Formally, we denote these functional mappings as $T_{\psi}: \mathcal{P} \rightarrow \mathcal{H}$ where $\mathcal{P}$ specifies the space of all distributions on $\mathcal{Z}$. 
These mappings, referred to as kernel embeddings of distributions, are defined as:
\begin{align}
T_{\psi}(p) := \mathbb{E}_{Z\sim p}[\psi(Z)]
\end{align}
for any $p \in \mathcal{P}$.
We are particularly interested in kernels with feature maps $\psi$ that define injective embeddings, \textit{i.e.}, for any pair of distributions $p_1$ and $p_2$, we have $T_{\psi}(p_1)\neq T_{\psi}(p_2)$ if $p_1 \neq p_2$. 
For injective embeddings, 
we can compute functionals of any distribution by directly applying a corresponding function on its embedding. Formally, for every function $\mathcal{O}:\mathcal{P}\rightarrow \mathbb{R}^d$, $ d\in \mathbb{N}$ and injective embedding $T_{\psi}$, there exists a function $\tilde{\mathcal{O}}_{\psi}:\mathcal{H}\rightarrow \mathbb{R}^d$ such that:
\begin{align}\label{eq:kernel_op}
\mathcal{O}(p) = \tilde{\mathcal{O}}_{\psi}(T_{\psi}(p)) \;\;\; \forall p \in \mathcal{P}.
\end{align}
Informally, we can see that the operator $\tilde{\mathcal{O}}_{\psi}$ can be defined as the composition of $\mathcal{O}$ with the inverse of $T_{\psi}$.

\subsection{Connections with mean-field inference}

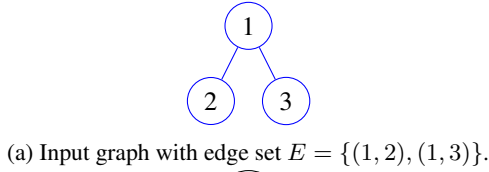
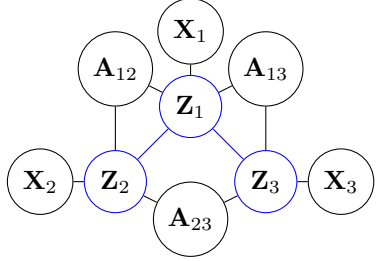
\begin{figure}[t]
\centering
\begin{subfigure}[b]{0.45\textwidth}
\centering
\begin{tikzpicture}
 \node[circle, blue, text=black] (a) at (0.5,1) [draw, minimum width=0.5cm,minimum height=0.5cm] {2};
 \node[circle, blue, text=black] (b) at (1.5,1) [draw, minimum width=0.5cm,minimum height=0.5cm] {3};
 \node[circle, blue, text=black] (c) at (1,2) [draw, minimum width=0.5cm,minimum height=0.5cm] {1};
 \foreach \from/\to in {a/c, c/b}
\draw [-, blue] (\from) -- (\to);
    \end{tikzpicture}
\caption{Input graph with edge set $E=\{(1,2), (1,3)\}$.}\label{fig:lvm_enc_graph}
\end{subfigure}
\begin{subfigure}[b]{0.45\textwidth}
\centering
\begin{tikzpicture}
\node[circle, black, text=black] (d) at (0,1) [draw, minimum width=0.15cm,minimum height=0.15cm]{$\mathbf{X}_2$};
 \node[circle, blue, text=black] (a) at (1,1) [draw, minimum width=0.15cm,minimum height=0.15cm] {$\mathbf{Z}_2$};
 \node[circle, black, text=black] (e) at (4,1) [draw, minimum width=0.15cm,minimum height=0.15cm]{$\mathbf{X}_3$};
 \node[circle, blue, text=black] (b) at (3,1) [draw, minimum width=0.15cm,minimum height=0.15cm] {$\mathbf{Z}_3$};
  \node[circle, black, text=black] (f) at (2,3) [draw, minimum width=0.05cm,minimum height=0.05cm]{$\mathbf{X}_1$};
 \node[circle, blue, text=black] (c) at (2,2) [draw, minimum width=0.05cm,minimum height=0.05cm] {$\mathbf{Z}_1$};
  \node[circle, black, text=black] (g) at (1,2.5) [draw, minimum width=0.35cm,minimum height=0.35cm]
  {$\mathbf{A}_{12}$};
  \node[circle, black, text=black] (h) at (2,0.5) [draw, minimum width=0.35cm,minimum height=0.35cm]{$\mathbf{A}_{23}$};
  \node[circle, black, text=black] (i) at (3,2.5) [draw, minimum width=0.35cm,minimum height=0.35cm]{$\mathbf{A}_{13}$};
 \foreach \from/\to in {a/c, c/b}
\draw [-, blue] (\from) -- (\to);
 \foreach \from/\to in {d/a, e/b, f/c}
\draw [-, black] (\from) -- (\to);
\foreach \from/\to in {g/a, g/c, h/a, h/b, i/b, i/c}
\draw [-, black] (\from) -- (\to);
    \end{tikzpicture}
\caption{Latent variable model $\mathcal{G}$ satisfying Property~\ref{thm:imap} with  $\mathbf{A}_{12}\neq 0,\mathbf{A}_{23}=0, \mathbf{A}_{13}\neq 0 $.}\label{fig:lvm_enc_model}
\end{subfigure}
\caption{Interpreting message passing in Graph Neural Networks via Kernel Embeddings and Mean-field inference}\label{fig:lvm_enc}
\end{figure}

Locality preference for representational learning is a key inductive bias for graphs. We formulate this using an (undirected) graphical model $\mathcal{G}$ over $\mathbf{X}$,  $\mathbf{A}$, and $\{\mathbf{Z}_1, \cdots, \mathbf{Z}_n\}$.
As in a GNN, we assume that $\mathbf{X}$ and $\mathbf{A}$ are observed and specify conditional independence structure in 
a
conditional distribution over the latent variables, denoted as $r(\mathbf{Z}_1, \cdots, \mathbf{Z}_n \vert \mathbf{A}, \mathbf{X})$. We are particularly interested in models that satisfy the following property.
\begin{property}\label{thm:imap}
The edge set $E$ defined by the adjacency matrix  $\mathbf{A}$ is an undirected I-map for the 
distribution $r(\mathbf{Z}_1, \cdots, \mathbf{Z}_n \vert \mathbf{A}, \mathbf{X})$.
\end{property}

In words, the above property implies that according to the conditional distribution over $\mathbf{Z}$, any individual  $\mathbf{Z}_i$ is independent of all other $\mathbf{Z}_j$ 
when conditioned on $\mathbf{A}$, $\mathbf{X}$, and the neighboring latent variables of node $i$ as determined by the edge set $E$. See Figure~\ref{fig:lvm_enc} for an illustration. 

A mean-field (MF) approximation for $\mathcal{G}$  approximates the conditional distribution  $r(\mathbf{Z}_1, \cdots, \mathbf{Z}_n \vert \mathbf{A}, \mathbf{X})$ as:
\begin{align}\label{eq:enc_gnn_mf}
r(\mathbf{Z}_1, \cdots, \mathbf{Z}_n \vert \mathbf{A}, \mathbf{X}) &\approx \prod_{i=1}^n q_{\phi_i}(\mathbf{Z}_i \vert \mathbf{A}, \mathbf{X})
\end{align}
where $\phi_i$ denotes the set of parameters for the $i$-th variational 
marginal.
These parameters are optimized by minimizing the KL-divergence between the variational 
and the true 
conditional distributions:
\begin{align}\label{eq:lvm_inf_enc}
\min_{\phi_1,\cdots,\phi_n} \mathrm{KL}\left(\prod_{i=1}^n q_{\phi_i}\left(\mathbf{Z}_i \vert \mathbf{A}, \mathbf{X}) \Vert r(\mathbf{Z}_1, \cdots, \mathbf{Z}_n \vert \mathbf{A}, \mathbf{X}\right)\right)
\end{align}

Using standard variational arguments~\citep{wainwright2008graphical}, we know that the optimal variational marginals  assume the following functional form:
\begin{align}
q_{\phi_i}(\mathbf{Z}_i \vert \mathbf{A}, \mathbf{X}) &= \mathcal{O}^{MF}_{\mathcal{G}}\left(\mathbf{Z}_i, \{q_{\phi_j}\}_{j \in \mathcal{N}(i)}\right)
\end{align}
where $\mathcal{N}(i)$ denotes the neighbors of $\mathbf{Z}_i$ in $\mathcal{G}$ 
and $\mathcal{O}$ is a function determined by the fixed point equations that depends on the potentials associated with $\mathcal{G}$. 
Importantly, the above functional form suggests that the optimal marginals in mean field inference are locally consistent that they are only a function of the neighboring marginals.
An iterative algorithm for mean-field inference is to perform message passing over the underlying graph until convergence. With an appropriate initialization at $l=0$, the updated marginals at iteration $l\in \mathbb{N}$ are given as:
\begin{align}\label{eq:marginal_mp}
q_{\phi_i}^{(l)}(\mathbf{Z}_i \vert \mathbf{A}, \mathbf{X}) &= \mathcal{O}^{MF}_{\mathcal{G}}\left(\mathbf{Z}_i, \{q_{\phi_j}^{(l-1)}\}_{j \in \mathcal{N}(i)}\right).
\end{align}

We will sidestep deriving $\mathcal{O}$, and instead use the kernel embeddings of the variational marginals to directly reason in the embedding space. 
That is, we assume we have an injective embedding for each marginal $q_{\phi_i}$ given by $\boldsymbol{\mu}_i = \mathbb{E}_{\mathbf{Z}_i\sim q_{\phi_i}}[\psi(\mathbf{Z}_i)]$ for some feature map $\psi: \mathbb{R}^k \rightarrow\mathbb{R}^{k'} $ 
and directly use the equivalence established in Eq.~\eqref{eq:kernel_op} iteratively. For mean-field inference via message passing as in Eq.~\eqref{eq:marginal_mp}, this gives us the following recursive expression for iteratively updating the embeddings at iteration $l\in \mathbb{N}$:
\begin{align}\label{eq:rec_mu}
\boldsymbol{\mu}^{(l)}_i = \tilde{O}^{MF}_{\psi,\mathcal{G}}\left(\{\boldsymbol{\mu}^{(l-1)}_j\}_{j \in \mathcal{N}(i)}\right)
\end{align}
with an appropriate base case for  $\boldsymbol{\mu}^{(0)}_i$. 
We then have the following result:

\begin{theorem}\label{thm:gnn_imap}
Let $\mathcal{G}$ be any undirected latent variable model such that the conditional distribution $r(\mathbf{Z}_1, \cdots, \mathbf{Z}_n \vert \mathbf{A}, \mathbf{X})$ expressed by the model satisfies Property~\ref{thm:imap}. 

Then there exists a choice of $\eta_l$, $\mathcal{F}_l$, 
$\mathbf{W}_l$, 
and $\mathbf{B}_l$ such that for all $\{\boldsymbol{\mu}^{(l-1)}_i\}_{i=1}^{n}$, the GNN propagation rule in Eq.~\eqref{eq:gnn_mp} is computationally equivalent to updating $\{\boldsymbol{\mu}^{(l-1)}_i\}_{i=1}^{n}$ via a first order approximation of Eq.~\eqref{eq:rec_mu}.  
\end{theorem}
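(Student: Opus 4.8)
The plan is to match, term by term, the affine map produced by a first-order Taylor expansion of the kernel-space mean-field update in Eq.~\eqref{eq:rec_mu} against the pre-activation of the GNN rule in Eq.~\eqref{eq:gnn_mp}. The single structural fact that makes this possible is Property~\ref{thm:imap}: because $E$ is an I-map for $r(\mathbf{Z}_1,\dots,\mathbf{Z}_n\vert\mathbf{A},\mathbf{X})$, the optimal mean-field marginal at node $i$ -- and hence its embedding update $\tilde{O}^{MF}_{\psi,\mathcal{G}}$ -- depends only on the embeddings $\{\boldsymbol{\mu}_j^{(l-1)}\}_{j\in\mathcal{N}(i)}$ of its graph neighbors, i.e.\ on those $j$ with $A_{ij}\neq 0$. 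This locality is exactly the sparsity pattern that $\mathbf{A}$ (equivalently $f(\mathbf{A})$) injects into the GNN aggregation, so the two updates have a chance of coinciding.

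First I would fix a base point $\{\boldsymbol{\mu}_j^{\ast}\}_{j}$ (for instance the current iterate or the prior mean) and write the first-order expansion
\begin{align*}
\boldsymbol{\mu}_i^{(l)} \approx \boldsymbol{\mu}_i^{\ast} + \sum_{j\in\mathcal{N}(i)} \mathbf{J}_{ij}\,\bigl(\boldsymbol{\mu}_j^{(l-1)} - \boldsymbol{\mu}_j^{\ast}\bigr),
\end{align*}
where $\mathbf{J}_{ij}$ is the block of $\partial \tilde{O}^{MF}_{\psi,\mathcal{G}}/\partial \boldsymbol{\mu}_j$ evaluated at the base point. The contribution that does not multiply a current neighbor embedding collects into a purely node-dependent constant $\boldsymbol{\mu}_i^{\ast}-\sum_{j\in\mathcal{N}(i)}\mathbf{J}_{ij}\boldsymbol{\mu}_j^{\ast}$, which may depend on $\mathbf{X}_i$, on $\mathbf{A}$, and on the base point through the potentials of $\mathcal{G}$, but not on the iterate $\boldsymbol{\mu}^{(l-1)}$.

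Next I would pass to matrix form by stacking the embeddings as the rows of $\mathbf{H}^{(l-1)}\in\mathbb{R}^{n\times k'}$. Invoking parameter tying across edges (shared potentials, so $\mathbf{J}_{ij}\equiv\mathbf{J}$), the neighbor sum becomes $\bigl(\sum_{f\in\mathcal{F}_l} f(\mathbf{A})\bigr)\mathbf{H}^{(l-1)}\mathbf{J}^{\top}$. This is precisely the bracketed quantity in Eq.~\eqref{eq:gnn_mp} once we set $\eta_l=\mathrm{id}$, choose $\mathcal{F}_l$ so that $\sum_{f\in\mathcal{F}_l} f(\mathbf{A})$ equals the neighbor-aggregation operator (e.g.\ $\mathbf{A}$ itself or its normalization), take $\mathbf{W}_l=\mathbf{J}^{\top}$, and pack the constants $\boldsymbol{\mu}_i^{\ast}-\sum_{j\in\mathcal{N}(i)}\mathbf{J}\boldsymbol{\mu}_j^{\ast}$ into row $i$ of the bias $\mathbf{B}_l\in\mathbb{R}^{n\times d_l}$. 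The fact that $\mathbf{B}_l$ is a full $n\times d_l$ matrix rather than a single shared vector is what lets the $\mathbf{X}_i$-dependent constants be absorbed exactly, giving the claimed computational equivalence.

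The main obstacle I anticipate is justifying the single shared weight $\mathbf{W}_l$ and single aggregation operator, since a priori the Jacobian block $\mathbf{J}_{ij}$ could vary with the edge and with the base embeddings. I would route all node-local pieces into the flexible bias as above, and argue that edge-homogeneity of $\mathbf{J}$ follows from the potentials of $\mathcal{G}$ being shared across edges -- the same tying that makes the model GNN-like in the first place; a genuinely heterogeneous Jacobian would instead require enlarging $\mathcal{F}_l$ to several propagation operators with their own weights, a routine but heavier bookkeeping. A secondary point I would state explicitly is that reasoning in embedding space is lossless, which is guaranteed because $\psi$ is assumed injective, so that the equivalence in Eq.~\eqref{eq:kernel_op} applies to the marginal update at every iteration.
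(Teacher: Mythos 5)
Your proposal is correct and follows the same core strategy as the paper's proof: linearize the kernel-space mean-field update $\tilde{O}^{MF}_{\psi,\mathcal{G}}$ to first order and match the resulting affine map, term by term, against the pre-activation of Eq.~\eqref{eq:gnn_mp}, with $\eta_l$ the identity and the constant term absorbed into the node-indexed bias $\mathbf{B}_l$. The one substantive difference is where the linearization coefficients are placed. The paper expands around the origin $\mathbf{0}$ and carries the per-neighbor partial derivatives inside $\mathcal{F}_l$: it takes a family of $n$ transformations $f_{l,j}(\mathbf{A}_i) = \frac{\partial \tilde{O}_{\psi,\mathcal{G}}}{\partial \mu^{(l-1)}_j}(\mathbf{0})\,A_{ij}$ together with $W_l = 1$, so no assumption of edge-homogeneity of the Jacobian is ever needed -- the heterogeneity is exactly what the freedom in choosing $\mathcal{F}_l$ is for. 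Your main construction instead uses a single aggregation operator and a shared $\mathbf{W}_l = \mathbf{J}^{\top}$, which requires the extra hypothesis $\mathbf{J}_{ij}\equiv\mathbf{J}$; you correctly flag this and name the fix (enlarging $\mathcal{F}_l$ to several propagation operators), which is in fact the paper's actual construction rather than a heavier alternative to it. Your expansion around a general base point $\{\boldsymbol{\mu}^{\ast}_j\}$ is slightly more general than the paper's expansion at the origin and costs nothing, since the resulting constants land in $\mathbf{B}_l$ either way; one small imprecision is that the zeroth-order term should be $\tilde{O}^{MF}_{\psi,\mathcal{G}}\bigl(\{\boldsymbol{\mu}^{\ast}_j\}_{j\in\mathcal{N}(i)}\bigr)$ rather than $\boldsymbol{\mu}^{\ast}_i$ -- these coincide only if the base point is a fixed point of the update -- but since this constant is absorbed into the bias the slip is harmless.
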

\begin{proof}
See Appendix~\ref{app:enc}.
\end{proof}

While $\eta_l$ and $\mathcal{F}_l$ are typically fixed beforehand, the parameters $\mathbf{W}_l$, 
and $\mathbf{B}_l$ are directly learned from data in practice. Hence we have shown that a GNN is a good model for computation with respect to latent variable models that attempt to capture inductive biases relevant to graphs, \textit{i.e.}, ones where the latent feature vector for every node is conditionally independent from everything else given the feature vectors of its neighbors (and $\mathbf{A}$, $\mathbf{X}$). Note that such a graphical model would satisfy Property~\ref{thm:imap} but is in general different from the posterior specified by the one in Figure~\ref{fig:lvm}. However if the true (but unknown) posterior on the latent variables for the model proposed in Figure~\ref{fig:lvm} could be expressed as an equivalent model satisfying the desired property, then Theorem~\ref{thm:gnn_imap} indeed suggests the use of GNNs for parameterizing variational posteriors, as we do so in the case of \name{}.

\section{Discussion \& Related Work}\label{sec:related}

Our framework effectively marries \textit{probabilistic modeling} and  \textit{representation learning} on graphs. We review some of the dominant prior works in these fields below.

\paragraph{Probabilistic modeling of graphs.}
The earliest \textit{probabilistic models of graphs} proposed to generate graphs by creating an edge between any pair of nodes with a constant probability~\citep{erdos1959random}. Several alternatives have been proposed since; \textit{e.g.}, the small-world model generates graphs that exhibit local clustering~\citep{watts1998collective}, the Barabasi-Albert models preferential attachment wherein high-degree nodes are likely to form edges with newly added nodes~\citep{barabasi1999random}, the stochastic block model is based on inter and intra community linkages~\citep{holland1983stochastic} etc.
 We direct the interested reader to prominent surveys on this topic~\citep{newman2003structure,mitzenmacher2004brief,chakrabarti2006graph}.

\paragraph{Representation learning on graphs.}
For \textit{representation learning on graphs}, there are broadly three kinds of approaches: matrix factorization, random walk based approaches, and graph neural networks. We include a brief discussion on the first two kinds in Appendix~\ref{app:related} and refer the reader to \citet{hamilton2017representation} for a recent survey.

Graph neural networks, a collective term for networks that operate over graphs using message passing, have shown success on several downstream applications, e.g., ~\citep{duvenaud2015molecular,li2015gated,kearnes2016molecular,kipf2016semi,hamilton2017inductive} and the references therein. \citet{gilmer2017neural} provides a comprehensive characterization of these networks in the message passing setup. We used Graph Convolution Networks, partly to provide a direct comparison with GAE/VGAE and leave the exploration of other GNN variants for future work.

\paragraph{Latent variable models for graphs.} Hierarchical Bayesian models parameterized by deep neural networks have been recently proposed for graphs~\citep{hu2017deep,wang2017relational}. Besides being restricted to single graphs, these models are limited since inference requires running expensive Markov chains~\citep{hu2017deep} or are task-specific~\citep{wang2017relational}. \citet{johnson2017transitions} and \citet{kipf2018neural} generate graphs as latent representations learned directly from data. In contrast, we are interested in modeling observed (and not latent) relational structure. Finally, there has been a fair share of recent work for generation of special kinds of graphs, such as parsed trees of source code~\citep{maddison2014structured} and SMILES representations  for molecules~\citep{olivecrona2017denovo}. 

 Several deep generative models for graphs have recently been proposed. Amongst adversarial generation approaches, \citet{wang2017graphgan} and \citet{bojchevski2018netgan} model local graph neighborhoods and random walks on graphs respectively. \citet{li2018learning} and \citet{you2018graphrnn} model graphs as sequences and generate graphs via autoregressive procedures. Adversarial and autoregressive approaches are successful at generating graphs, but do not directly allow for inferring latent variables via encoders. Latent variable generative models have also been proposed for generating small molecular graphs~\citep{jin2018junction,samanta2018designing,simonovsky2018graphvae}. These methods involve an expensive decoding procedure that limits scaling to large graphs. Finally, closest to our framework is the GAE/VGAE approach \citep{kipf2016variational} discussed in Section~\ref{sec:exps}. \citet{pan2018adversarially} extends this approach with an adversarial regularization framework but retain the inner product decoder. Our work proposes a novel multi-step decoding mechanism based on graph refinement.

\section{Conclusion \& Future Work}
We proposed \name{}, a scalable deep generative model for graphs based on variational autoencoding. The encoders and decoders in \name{} are parameterized by graph neural networks that propagate information locally on a graph. 
Our proposed decoder performs a multi-layer iterative decoding comprising of alternate inner product operations and message passing on the intermediate graph.

Current generative models for graphs are not permutation-invariant and are learned by feeding graphs with a fixed or heuristic ordering of nodes.
This is an exciting challenge for future work, which could potentially be resolved by incorporate graph representations robust to permutation invariances~\citep{verma2017hunt} or modeling distributions over permutations of node orderings via recent approaches such as NeuralSort~\citep{grover2019stochastic}.
Extending \name{} for modeling richer graphical structure such as heterogeneous and time-varying graphs, as well as integrating domain knowledge within \name{} decoders for applications in generative design and synthesis \textit{e.g.}, molecules, programs, and parse trees is another interesting future direction. 

Finally, our theoretical results in Section~\ref{sec:interpret} suggest that a principled design of layerwise propagation rules in graph neural networks inspired by additional message passing inference schemes~\citep{dai2016discriminative,gilmer2017neural} is another avenue for future research.

\section*{Acknowledgements}
 This research has been supported by Siemens, a Future of Life Institute grant, NSF grants (\#1651565, \#1522054, \#1733686), ONR (N00014-19-1-2145), AFOSR (FA9550-19-1-0024), and an Amazon AWS Machine Learning Grant. AG is supported by a Microsoft Research Ph.D. fellowship and a Stanford Data Science Scholarship. We would like to thank Daniel Levy for helpful comments on early drafts.

\bibliography{refs}
\bibliographystyle{icml2019}

\clearpage
\appendix
\section*{Appendices}

\section{Proof of Theorem~\ref{thm:gnn_imap}}\label{app:enc}
\begin{proof}
For simplicity, we state the proof for a single variational marginal embedding $\mu^{(l)}_i$ and consider that $\mu^{(l)}_i$ for all $l \in \mathbb{N} \cup 0$ are unidimensional. 

Let us denote $\mathbf{N}_i^{(l)} \in \mathbb{R}^{n}$ to be the vector of neighboring kernel embeddings at iteration $l$ such that the $j$-th entry of $\mathbf{N}_i^{(l)}$ corresponds to $\mu^{(l)}_j$ if $j \in \mathcal{N}(i)$ and zero otherwise.  Hence, we can rewrite Eq.~\eqref{eq:rec_mu} as:
\begin{align}\label{eq:rec_mu_n}
\mu^{(l)}_i =\tilde{O}_{\psi, \mathcal{G}}\left(\mathbf{N}_i^{(l-1)}\right)
\end{align}
where we have overloaded $\tilde{O}_{\psi, , \mathcal{G}}$ to now denote a function that takes as argument an $n$-dimensional vector of marginal embeddings.

Assuming that the function $\tilde{O}_{\psi, \mathcal{G}}$ is differentiable, a first-order Taylor expansion of Eq.~\eqref{eq:rec_mu_n} around the origin $\mathbf{0}$ is given by:
\begin{align}\label{eq:taylor}
\mu^{(l)}_i \approx \tilde{O}_{\psi, \mathcal{G}}\left(\mathbf{0}\right)
+  \mathbf{N}_i^{(l-1)} \cdot \nabla \tilde{O}_{\psi, \mathcal{G}}\left(\mathbf{0}\right). 
\end{align}

Since every marginal density is unidimensional, we now consider a GNN with a single activation per node in every layer, $i.e.$, $\mathbf{H}^{(l)}\in \mathbb{R}^{n}$  for all $l \in \mathbb{N} \cup 0$. 
 This also implies that the bias can be expressed as an  $n$-dimensional vector, \textit{i.e.}, $\mathbf{B}_{l} \in \mathbb{R}^{n}$ and we have a single weight parameter $W_l \in \mathbb{R}$.
For a single entry of $\mathbf{H}^{(l)}$, we can specify Eq.~\eqref{eq:gnn_mp} component-wise as:
\begin{align}\label{eq:gnn_mp_comp}
H^{(l)}_i = \eta_{l}\left( B_{l,i} + \sum_{f \in \mathcal{F}_{l}}f(\mathbf{A}_i)\mathbf{H}^{(l-1)}W_l 
\right)
\end{align}
where $\mathbf{A}_i$ denotes the $i$-th row of $\mathbf{A}$ and is non-zero only for entries corresponding to the neighbors of node $i$.

Now, consider the following instantiation of Eq.~\eqref{eq:gnn_mp_comp}:
\begin{itemize}
\item $\eta_{l}\leftarrow \mathcal{I}$ (identity function)
\item $B_{l,i} \leftarrow  \tilde{O}_{\psi, \mathcal{G}}\left(\mathbf{0}\right)$
\item A family of $n$ transformations $\mathcal{F}_l = \{f_{l,j}\}_{j=1}^n$ where $f_{l,j}(\mathbf{A}_i) = 
 \frac{\partial \tilde{O}_{\psi, \mathcal{G}}}{\partial \mu^{(l-1)}_j}\left(\mathbf{0}\right)
A_{ij}$
\item $H^{(l-1)}_i \leftarrow \mu^{(l-1)}_i$
\item $W_{l} \leftarrow 1$.
\end{itemize}

With the above substitutions, we can equate the first order approximation in Eq.~\eqref{eq:rec_mu_n} to the GNN message passing rule in Eq.~\eqref{eq:gnn_mp_comp}, thus completing the proof. With vectorized notation and use of matrix calculus in Eqs.~(\ref{eq:rec_mu_n}-\ref{eq:gnn_mp_comp}), the derivation above also applies to entire vectors of variational marginal embeddings with arbitrary dimensions.
\end{proof}

\section{Experiment Specifications}\label{app:expt}

\subsection{Link prediction}

We used the SC implementation from \citep{pedregosa2011scikit} and public implementations for others made available by the authors. For SC, we used a dimension size of $128$. For DeepWalk and node2vec which uses a skipgram like objective on random walks from the graph, we used the same dimension size and default settings used in \citep{perozzi2014deepwalk} and \citep{grover2016node2vec} respectively of $10$ random walks of length $80$ per node and a context size of $10$. For node2vec, we searched over the random walk bias parameters using a grid search in $\{0.25, 0.5, 1, 2, 4\}$ as prescribed in the original work. For GAE and VGAE, we used the same architecture as VGAE and Adam optimizer with learning rate of $0.01$. 

For Graphite-AE and Graphite-VAE, we used an architecture of 32-32 units for the encoder and 16-32-16 units for the decoder (two rounds of iterative decoding before a final inner product). The model is trained using the Adam optimizer~\citep{kingma2013auto} with a learning rate of $0.01$. All activations were RELUs.The dropout rate (for edges) and $\lambda$ were tuned as hyperparameters on the validation set to optimize the AUC, whereas traditional dropout was set to 0 for all datasets. Additionally, we trained every model for $500$ iterations and used the model checkpoint with the best validation loss for testing.  Scores are reported as an average of 50 runs with different train/validation/test splits (with the requirement that the training graph necessarily be connected).

For \name{}, we observed that using a form of skip connections to define a linear combination of the initial embedding $\mathbf{Z}$ and the final embedding $\mathbf{Z}^*$ is particularly useful. The skip connection consists of a tunable hyperparameter $\lambda$ controlling the relative weights of the embeddings.
The final embedding of \name{} is a function of the initial embedding $Z$ and the last induced embedding $Z^*$.  We consider two functions to aggregate them into a final embedding.  That is, $(1-\lambda)Z + \lambda Z^*$ and $Z + \lambda Z^* / \left\lVert Z^* \right\rVert$, which correspond to a convex combination of two embeddings, and an incremental update to the initial embedding in a given direction, respectively. Note that in either case, GAE and VGAE reduce to a special case of \name{}, using only a single inner-product decoder (\textit{i.e.}, $\lambda=0$).  On Cora and Pubmed final embeddings were derived through convex combination, on Citeseer through incremental update.

\paragraph{Scalability.} We experimented with learning VGAE and \name{} models by subsampling $\vert E \vert$ random entries for Monte Carlo evaluation of the objective at each iteration. The corresponding AUC scores are shown in Table~\ref{table-scale}. The results suggest that \name{} can effectively scale to large graphs without significant loss in accuracy. 
The AUC results trained with edge subsampling as we vary the subsampling coefficient $c$ are shown in Figure~\ref{fig:scalability}.

\begin{table}[t]
\centering
  \caption{AUC scores for link prediction with Monte Carlo subsampling during training. Higher is better.}
  \label{table-scale}
  \vspace{0.05in}
  \centering
  \begin{tabular}{|c|c|c|c|}
    \toprule
	& Cora & Citeseer & Pubmed \\
    \midrule
    VGAE & 89.6 & 92.2 & 92.3\\
    Graphite & \textbf{90.5} & \textbf{92.5} & \textbf{93.1}\\
    \bottomrule
  \end{tabular}
\end{table}

\begin{figure}[t]
\centering
\includegraphics[width=0.5\textwidth]{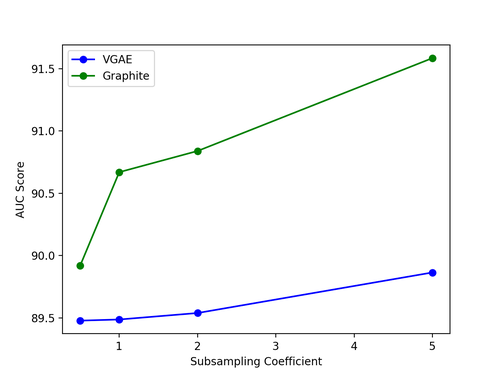}
\caption{AUC score of VGAE and Graphite with subsampled edges on the Cora dataset.}\label{fig:scalability} 
\end{figure}

\subsection{Semi-supervised node classification}

We report the baseline results for  SemiEmb~\citep{weston2008embedding},  DeepWalk~\citep{perozzi2014deepwalk}, ICA~\citep{lu2003classification} and  Planetoid~\citep{yang2016embeddings} as specified in~\citep{kipf2016semi}.  GCN uses a 32-16 architecture with ReLu activations and early stopping after $10$ epochs without increasing validation accuracy.  The \name{} model uses the same architecture as in link prediction (with no edge dropout). The parameters of the posterior distributions are concatenated with node features to predict the final output.  The parameters are learned using the Adam optimizer~\citep{kingma2013auto} with a learning rate of $0.01$. All accuracies are taken as an average of 100 runs.

\subsection{Density estimation}
To accommodate for input graphs of different sizes, we learn a model architecture specified for the maximum possible nodes (\textit{i.e.}, $20$ in this case). While feeding in smaller graphs, we simply add dummy nodes disconnected from the rest of the graph. The dummy nodes have no influence on the gradient updates for the parameters affecting the latent or observed variables involving nodes in the true graph.
For the experiments on density estimation, we pick a graph family, then train and validate on graphs sampled exclusively from that family.  We consider graphs with nodes ranging between 10 and 20 nodes belonging to the following graph families :
\begin{itemize}
\item Erdos-Renyi~\citep{erdos1959random}: each edge independently sampled with probability $p = 0.5$
\item Ego Network: a random Erdos-Renyi graph with all nodes neighbors of one randomly chosen node
\item Random Regular: uniformly random regular graph with degree $d = 4$
\item Random Geometric: graph induced by uniformly random points in unit square with edges between points at euclidean distance less than $r = 0.5$
\item Random Power Tree: Tree generated by randomly swapping elements from a degree distribution to satisfy a power law distribution for $\gamma = 3$
\item Barabasi-Albert ~\citep{barabasi1999random}: Preferential attachment graph generation with attachment edge count $m = 4$
\end{itemize}

We use convex combinations over three successively induced embeddings.  Scores are reported over an average of 50 runs.  Additionally, a two-layer neural net is applied to the initially sampled embedding $\mathbf{Z}$ before being fed to the inner product decoder for GAE and VGAE, or being fed to the iterations of Eqs.~\eqref{eq:int_graph_decoding} and \eqref{eq:refine_graph_dec} for both Graphite-AE and Graphite-VAE.

  \section{Additional Related Work}\label{app:related}
  \textbf{Factorization based approaches}, such as Laplacian Eigenmaps~\citep{belkin2002laplacian} and IsoMaps~\citep{saxena2004non}, operate on a matrix representation of the graph, such as the adjacency matrix or the graph Laplacian. These approaches are closely related to dimensionality reduction and can be computationally expensive for large graphs. 

\textbf{Random-walk methods} are based on variations of the skip-gram objective~\citep{mikolov2013distributed} and learn representations by linearizing the graph through random walks. These methods, in particular DeepWalk~\citep{perozzi2014deepwalk}, LINE~\citep{tang2015line}, and node2vec~\citep{grover2016node2vec}, learn general-purpose unsupervised representations that have been shown to give excellent performance for  semi-supervised node classification and link prediction. Planetoid~\citep{yang2016embeddings} learn representations based on a similar objective specifically for semi-supervised node classification by explicitly accounting for the available label information during learning.

\end{document}